\pgfplotsset{compat = newest}
\newtheorem{lemma}{Lemma}[section]
\newtheorem{proposition}[lemma]{Proposition}
\newtheorem{theorem}[lemma]{Theorem}
\theoremstyle{remark}
\newtheorem{example}[lemma]{Example}
\newtheorem*{densprop}{Density Property}
\newcommand{\GNNLang}{\text{MPLang}}
\newcommand{\GNN}{\text{MPNN}}
\newcommand{\ReLU}{\mathrm{ReLU}}
\newcommand{\identity}{\mathrm{id}}
\newcommand{\id}{\identity}
\newcommand{\GFMT}{\text{GFMT}}
\newcommand{\Rb}{\mathbb{R}}
\newcommand{\Ones}{1}
\newcommand{\Diamon}{\mathop\lozenge}
\newcommand{\G}{\mathbf G}
\newcommand{\Grs}{\mathbb{G}}
\newcommand{\Neigh}[2]{N(#1)(#2)}
\newcommand{\dist}{\rho_{\Grs_p,X}}
\newcommand{\Feat}[2]{\mathit{Feat}(#1,#2)}
\newcommand{\FeatX}[3]{\mathit{Feat}(#1,#2,#3)}
\renewcommand{\subset}{\subseteq}
\title{On the expressive power of \\
message-passing neural networks as \\ global feature map transformers}
\author{Floris Geerts \\
\normalsize University of Antwerp \\
\normalsize floris.geerts@uantwerpen.be \and
Jasper Steegmans\thanks{Supported by the Special
Research Fund (BOF) of Hasselt University.} \\
\normalsize Hasselt University \\
\normalsize jasper.steegmans@uhasselt.be \and
Jan Van den Bussche \\
\normalsize Hasselt University \\
\normalsize jan.vandenbussche@uhasselt.be}
\date{}
\begin{document}

\maketitle

\begin{abstract}
	We investigate the power of message-passing neural networks
	(MPNNs)
	in their capacity to transform the numerical features
	stored in the nodes of their input graphs.  Our focus is on
	global expressive power, uniformly over all input graphs, or over
	graphs of bounded degree with features from a bounded domain.
	Accordingly, we introduce the notion of a global feature map
	transformer (GFMT)\@.  As a yardstick for expressiveness, we use a
	basic language for GFMTs, which we call MPLang.  Every MPNN can
	be expressed in MPLang, and our results clarify to which extent
	the converse inclusion holds.  We consider exact versus
	approximate expressiveness; the use of arbitrary activation
	functions; and the case where only the ReLU activation function
	is allowed.

\end{abstract}


\section{Introduction}

An important issue in machine learning is the choice of formalism
to represent the functions to be learned
\cite{aima-book,understanding-ml-book}.  For example,
feedforward neural networks with hidden layers are a popular
formalism for representing functions from $\Rb^n$ to $\Rb^p$.
When considering functions over graphs, graph neural networks
(GNNs) have come to the fore \cite{hamilton-grl}.  GNNs come in
many variants; in this paper, specifically, we will work with the
variant known as message-passing neural networks (MPNNs)
\cite{gilmer-mpnn}.

MPNNs compute numerical values on the nodes of an input graph,
where, initially, the nodes already store vectors of numerical
values, known as \emph{features}.  Such an assignment of features
to nodes may be referred to as a \emph{feature map} on the graph
\cite{grohe-logic-gnn}.  We can thus view an MPNN as representing a
function that maps a graph, together with a feature map, to a new
feature map on that graph.  We refer to such functions as
\emph{global feature map transformers (GFMTs)}.

Of course, MPNNs are not intended to be directly specified by
human designers, but rather to be learned automatically from
input--output examples.  Still, MPNNs do form a language for
GFMTs.  Thus the question naturally arises:
what is the expressive power of this language?

We believe GFMTs provide a suitable basis for investigating this
question rigorously. The G for `global' here is borrowed from the
terminology of \emph{global function} introduced by Gurevich
\cite{gurevich_algfeas,gurevich_challenge}.  Gurevich was
interested in defining functions in structures (over some fixed
vocabulary) \emph{uniformly}, over all input structures.
Likewise, here we are interested in expressing GFMTs uniformly
over \emph{all} input graphs.  We also consider infinite
subclasses of all graphs, notably, the class of all graphs with a
fixed bound on the degree.

As a concrete handle on our question about the expressive power
of MPNNs, in this paper we define the language MPLang.  This
language serves as a yardstick for expressing GFMTs, in analogy
to the way Codd's relational algebra serves as a yardstick for
relational database queries \cite{ahv_book}.  Expressions in
MPLang can define features built arbitrarily from the input
features using three basic operations also found in MPNNs:
\begin{enumerate}
	\item
		Summing a feature over all neighbors in the graph, which
		provides the message-passing aspect;
	\item
		Applying an activation
		function, which can be an arbitrary continuous function;
	\item
		Performing arbitrary affine transformations (built using
		constants, addition, and scalar multiplication).
\end{enumerate}
The difference between MPLang-expressions and MPNNs is that
the latter must apply the above three operations in a rigid
order, whereas the operations can be combined arbitrarily in
MPLang.  In particular, every MPNN is readily expressible in
MPLang.

Our research question can now be made concrete: is, conversely,
every GFMT expressible in MPLang also expressible by an
MPNN\@?  We offer the following answers.
\begin{enumerate}
	\item
		We begin by considering the case of the popular activation
		function $\ReLU$
		\cite{goodfellow-book,arora-relu}. In this case, we show that
		every MPLang expression can indeed be converted into an
		MPNN (Theorem~\ref{thmrelu}).
	\item
		When arbitrary activation functions are allowed, we show that
		Theorem~1 still holds in restriction to
		any class of graphs of bounded degree, equipped with
		features taken from a bounded domain (Theorem~\ref{thmcont}).
	\item
		Finally, when the MPNN is required to use the ReLU activation
		function, we show that every MPLang expression can still
		be \emph{approximated} by an MPNN; for this result we again
		restrict to graphs of bounded degree, and moreover to
		features taken from a compact domain (Theorem~\ref{thmapprox}).
\end{enumerate}

This paper is organized as follows. Section~\ref{secrel}
discusses related work. Section~\ref{secdefs} defines GFMTs, MPNNs and
MPLang formally. Sections \ref{secrelu}, \ref{secont} and
\ref{secapprox} develop our Theorems \ref{thmrelu}, \ref{thmcont}
and \ref{thmapprox}, respectively.
We conclude in Section~\ref{seconc}.

Certain concepts and arguments assume some
familiarity with real analysis \cite{rudin}.

\section{Related work} \label{secrel}

The expressive power of GNNs has received a great deal of
attention in recent years. A very nice introduction, highlighting
the connections with finite model theory and database theory, has
been given by Grohe \cite{grohe-logic-gnn}.

One important line of research is focused on characterizing the
distinguishing power (also called separating power) of GNNs, in
their many variants.  There, one is interested in the question:
given two graphs, when can they be distinguished by a GNN\@?
This question is closely related to strong methods for graph
isomorphism checking, and more specifically, the Weisfeiler-Leman
algorithm.  A recent overview has been given by Morris et
al.~\cite{morris-wl-survey}.

Another line of research has as goal to extend classical results
on the ``universality'' of neural networks \cite{pinkus-survey}
to graphs \cite{grohe-random-gnn,azizian}.  (There are close
connections between this line of research and the one just
mentioned on distinguishing power \cite{gr-gnn-tl}.)  These
results consider graphs with a fixed number $n$ of nodes;
functions on graphs are shown to be approximable by appropriate
variants of GNNs, which, however, may depend on $n$.

A notable exception is the work by Barcel\'o et
al.~\cite{barcelo-log-expr-gnn,barcelo-gnn-sigrec}, which
inspired our present work.  Barcel\'o et al.\ were the first to
consider expressiveness of GNNs uniformly over all graphs
(note, however, the earlier work of Hella et al.\
\cite{hella-distributed} on similar message-passing distributed
computation models).  Barcel\'o et al.\ focus on MPNNs, which
they fit in a more general framework named AC-GNNs, and they also
consider extensions of MPNNs.  They further focus on \emph{node
	classifiers}, which, in our terminology, are GFMTs where the
input and output features are boolean values.  Using the
truncated ReLU activation function, they show that MPNNs can
express every node classifiers expressible in \emph{graded modal
	logic} (the converse inclusion holds as well).

In a way, our work can be viewed as generalizing the boolean
setting considered by Barcel\'o et al.\ to the numerical setting.
Indeed, the language MPLang can be viewed as giving a numerical
semantics to positive modal logic without conjunction, following
the established methodology of semiring provenance semantics for
query languages \cite{provsemirings,graedel-prov-modal}, and
extending the logic with application of arbitrary activation
functions.  By focusing on boolean inputs and outputs, Barcel\'o
et al.\ are able to capture a stronger logic than our positive
modal logic, notably, by expressing negation and counting.

We note that MPLang is a sublanguage of
the Tensor Language defined recently by one of us and Reutter
\cite{gr-gnn-tl}. That language serves to unify several GNN variants and
clarify their separating power and universality (cf.~the first
two lines of research on GNN expressiveness mentioned above).

Finally, one can also take a matrix computation perspective, and
view a graph on $n$ nodes, together with a $d$-dimensional
feature map, as an $n\times n$ adjacency matrix, together with
$d$ column vectors of dimension $n$.  To express GFMTs, one may
then simply use a general matrix query language such as MATLANG
\cite{matlang_tods}.  Indeed, results on the distinguishing power
of MATLANG fragments \cite{floris_lagraphs} have been applied to
analyze the distinguishing power of GNN variants \cite{balcilar}.
Of course, the specific message-passing nature of computation
with MPNNs is largely lost when performing general computations
with the adjacency and feature matrices.

\section{Models and languages} \label{secdefs}

In this section, we recall preliminaries on graphs; introduce the
notion of global feature map transformer (GFMT);
formally recall message-passing neural networks and define their
semantics in terms of GFMTs; and define the language MPLang.

\subsection{Graphs and feature maps} \label{secprelim}

We define a \emph{graph} as a pair $G = (V,E)$, where $V$ is the
set of nodes and $E \subseteq V \times V $ is the edge relation. We
denote $V$ and $E$ of a particular graph $G$ as $V(G)$ and $E(G)$
respectively. By default, we assume graphs to be finite,
undirected, and without loops, so $E$ is symmetric and
antireflexive.  If $(v,u) \in E(G)$ then we call $u$ a neighbor
of $v$ in $G$. We denote the set of neighbors of $v$ in $G$ by
$\Neigh{G}{v}$.  The number of neighbors of a node is called the
degree of that node, and the degree of a graph is the maximum
degree of its nodes.  We use $\Grs$ to denote the set of all
graphs, and $\Grs_p$, for a natural number $p$, to denote the set
of all graphs with degree at most $p$.

For a natural number $d$, a \emph{$d$-dimensional feature map} on
a graph $G$ is a function $\chi: V(G) \rightarrow \Rb^{d}$,
mapping the nodes to \emph{feature vectors}.  We use
$\Feat{G}{d}$ to denote the set of all possible $d$-dimensional
feature maps on $G$. Similarly, for a subset $X$ of $\Rb^d$, we
write $\FeatX{G}{d}{X}$ for the set of all feature maps from
$\Feat{G}{d}$ whose image is contained in $X$.

\subsection{Global feature map transformers}

Let $d$ and $r$ be natural numbers. We define a
\emph{global feature map transformer (GFMT) of type $d
		\rightarrow r$}, to be a function $T: \Grs \rightarrow
	(\Feat{G}{d} \rightarrow \Feat{G}{r})$. Thus, if $G$ is a graph
and $\chi$ is a $d$-dimensional feature map on $G$, then
$T(G)(\chi)$ is an $r$-dimensional feature map on $G$. We call
$d$ and $r$ the \emph{input} and \emph{output arity} of $T$,
respectively.

\begin{example} \label{exgfmt}
	We give a few simple examples, just to fix the notion, all with
	output arity 1. (GFMTs with higher output arities, after all, are just
	tuples of GFMTs with output arity 1.)
	\begin{enumerate}
		\item
			The GFMT $T_1$ of type $2 \to 1$ that assigns to every
			node the average of its two feature values.
			Formally, $T_1(G)(\chi)(v) = (x+y)/2$, where
			$\chi(v)=(x,y)$.
		\item
			The GFMT $T_2$ defined like $T_1$, but taking the maximum
			instead of the average.
		\item
			The GFMT $T_3$ of type $1 \to 1$ that assigns to every node
			the maximum of the features of its neighbors.  Formally,
			$T_3(G)(\chi)(v) = \max \{\chi(u) \mid u \in \Neigh Gv\}$.
		\item
			The GFMT $T_4$ of type $1 \to 1$ that assigns to every node
			$v$ the sum, over all paths of length two from $v$,
			of the feature values of the end nodes of the paths.
			Formally, $$ T_4(G)(\chi)(v) = \sum_{(v,u) \in E(G)}
				\sum_{(u,w) \in E(G)} \chi(w). $$
	\end{enumerate}
\end{example}

\subsection{Operations on GFMTs}

If $T_1,\dots ,T_r$ are GFMTs of type $d \rightarrow 1$, then
the tuple $(T_1,\dots,T_r)$ defines a GFMT $T$ of type $d
	\rightarrow r$ in the obvious manner:
\begin{equation}
	T(G)(\chi)(v) := (T_1(G)(\chi)(v),\dots ,T_r(G)(\chi)(v))
	\label{eq:tuple_of_GFMTs}
\end{equation}
Conversely, it is clear that any $T$ of type $d \rightarrow r$
can be expressed as a tuple $(T_1,\dots,T_r)$ as above, where
$T_i(G)(\chi)(v)$ equals the $i$-th component in the tuple
$T(G)(\chi)(v)$.

Related to the above tupling operation is concatenation.  Let
$T_1$ and $T_2$ be GFMTs of type $d \to r_1$ and $d \to
	r_2$, respectively. Their \emph{concatenation} $T_1 \mid T_2$
is the GFMT $T$ of type $d \to r_1 + r_2$ defined by
$T(G)(\chi)(v) = T_1(G)(\chi)(v) \mid T_2(G)(\chi)(v))$, where
$\mid$ denotes concatenation of vectors.  Concatenation is
associative.  Thus, we could write
the previously defined $(T_1,\dots,T_r)$ also as $T_1 \mid
	\dots \mid T_r$.

We also define the \emph{parallel composition} $T_1 \parallel T_2$ of
two GFMTs $T_1$ and $T_2$, of type $d_1 \to r_1$ and $d_2 \to
	r_2$, respectively.  It is the GFMT $T$ of type $(d_1 + d_2)
	\to (r_1 + r_2)$ defined by $T(G)(\chi)(v) = T_1(G)(\chi_1)(v)
	\mid T_2(G)(\chi_2)(v)$, where $\chi_1$ ($\chi_2$) is the
feature map that assigns to any node $w$ the projection of
$\chi(w)$ to its first (last) $d_1$ ($d_2$) components.

In contrast, the \emph{sequential composition} $T_1;T_2$ of two
GFMTs $T_1$ and $T_2$, of type $d_1 \rightarrow d_2$ and $d_2
	\rightarrow d_3$ respectively, is the GFMT $T$ of type $d_1 \to
	d_3$ that maps every graph $G$ to $T_2(G) \circ T_1(G)$.  In
other words, $(T_1;T_2)(G)(\chi)(v) = T_2(G)(T_1(G)(\chi))(v)$.

Finally, for two GFMTS $T_1$ and $T_2$ of type $d \to r$, we
naturally define their sum $T_1 + T_2$ by $(T_1 +
	T_2)(G)(\chi)(v) := T_1(G)(\chi)(v) + T_2(G)(\chi)(v)$ (addition
of $r$-dimensional vectors). The difference $T_1-T_2$ is defined
similarly.

\newcommand{\Thalf}{T_{\rm half}}
\newcommand{\Tsum}{T_{\rm sum}}
\begin{example} \label{exops}
	Recall $T_1$ and $T_4$ from Example~\ref{exgfmt}, and consider the
	following simple GFMTs:
	\begin{itemize}
		\item
			For $j=1,2$, the GFMT $P_j$ of type $2\to 1$ defined by
			$P_j(G)(\chi)(v) = x_j$, where $\chi(v)=(x_1,x_2)$.
		\item
			The GFMT $\Thalf$ of type $1\to 1$ defined by
			$\Thalf(G)(\chi)(v) = \chi(v)/2$.
		\item
			The GFMT $\Tsum$ of type $1\to 1$ defined by
			$$ \Tsum(G)(\chi)(v) = \sum_{u \in \Neigh
					Gv} \chi(u). $$
	\end{itemize}
	Then $T_1$ equals $(P_1+P_2) ; \Thalf$, and $T_4$ equals
	$\Tsum ; \Tsum$.
\end{example}

\subsection{Message-passing neural networks} \label{secmpnn}

A \emph{message-passing neural network (MPNN)} consists of
layers.  Formally, let $d$ and $r$ be natural numbers.
An \emph{MPNN layer of type $d \rightarrow r$} is a 4-tuple
$L=(W_1, W_2,b,\sigma)$, where $\sigma: \Rb \rightarrow \Rb$ is a
continuous function, and $W_1$, $W_2$ and $b$ are real matrices
of dimensions $r \times d$, $r \times d$ and $r \times 1$,
respectively.  We call $\sigma$ the \emph{activation function} of
the layer; we also refer to $L$ as a \emph{$\sigma$-layer}.

An MPNN layer $L$ as above defines a $\GFMT$ of type $d
	\rightarrow r$ as follows:
\begin{equation}
	\label{eq-layer}
	L(G)(\chi)(v) := \sigma \bigl( W_1 \chi(v) + W_2 \sum_{u \in
		\Neigh{G}{v}} \chi(u) + b \bigr).
\end{equation}

In the above formula, feature vectors are used as \emph{column
	vectors}, i.e., $d \times 1$ matrices.
The matrix multiplications involving $W_1$ and $W_2$ then produce
$r \times 1$ matrices, i.e.,
$r$-dimensional feature vectors as desired. We see that matrix $W_1$
transforms the feature vector of the current node from a
$d$-dimensional vector to an $r$-dimensional vector. Matrix $W_2$
does a similar transformation but for the sum of the feature
vectors of the neighbors.  Vector $b$ serves as a \emph{bias}.
The application of $\sigma$ is performed component-wise on the
resulting vector.

We now define an MPNN as a finite, nonempty sequence
$L_1,\dots ,L_p$ of MPNN layers, such that the input arity of
each layer, except the first, equals the output arity of
the previous layer.  Such an MPNN naturally defines a GFMT that
is simply the sequential composition $L_1;\dots;L_p$ of its layers.
Thus, the input arity of the first layer serves as the input
arity, and the output arity of the last layer serves as the
output arity.

\begin{example} \label{exmpnn}
	Recall the ``rectified linear unit'' function $\ReLU : \Rb \to
		\Rb : z \mapsto \max(0,z)$.  Observe that $\max(x,y) =
		\ReLU(y-x) + x$, and also that $x = \ReLU(x) - \ReLU(-x)$.
	Hence, $T_2$ from Example~\ref{exgfmt} can be
	expressed by a two-layer MPNN, where the first layer $L_1$
	transforms input feature vectors $(x,y)$ to feature vectors
	$(y-x,x,-x)$ and then applies $\ReLU$, and the second layer
	$L_2$ transforms the feature vector $(a,b,c)$ produced by $L_1$
	to the final result $a+b-c$.  Formally, $L_1 = (A,0^{3\times
		2},0^{3\times 1},\ReLU)$, with $$ A = \begin{pmatrix} -1           & 1 \\
                \phantom{-}1 & 0 \\ -1 & 0\end{pmatrix}, $$ and $L_2 =
		((1,1,-1),(0,0,0),0,\id)$, with $\id$ the identity function.

	For another, simple, example, $\Tsum$ from Example~\ref{exops}
	is expressed by the single layer $(0,1,0,\id)$.
\end{example}

\paragraph{Same activation function}

If, for a particular MPNN, and an activation function $\sigma$,
all layers except the last one are $\sigma$-layers, and the last
layer is either also a $\sigma$-layer, or has the identity
function as activation function, we refer to the MPNN as a
$\sigma$-MPNN.  Thus, the two MPNNs in the above example are
$\ReLU$-MPNNs.

\subsection{MPLang}

We introduce a basic language for expressing GFMTs.
The syntax of expressions $e$ in MPLang is given by the following grammar:
$$ e ::= \Ones \mid P_i \mid a e \mid e + e \mid f(e) \mid \Diamon e $$
where $i$ is a non-zero natural number, $a \in \Rb$ is a
constant, and $f:\Rb \to \Rb$ is continuous.

An expression $e$ is called \emph{appropriate for input arity
	$d$} if all subexpressions of $e$ of the form $P_i$ satisfy $1
	\leq i \leq d$.  In this case, $e$ defines a GFMT of type $d \to
	1$, as follows:
\begin{itemize}
	\item if $e = \Ones$, then $e(G)(\chi)(v) := 1$
	\item if $e = P_i$, then $e(G)(\chi)(v) :=$ the $i$-th component of $\chi(v)$
	\item if $e = a e_1$, then $e(G)(\chi)(v) := a e_1(G)(\chi)(v)$
	\item if $e = e_1 + e_2$, then $e(G)(\chi)(v) := e_1(G)(\chi)(v) + e_2(G)(\chi)(v)$
	\item if $e = f(e_1)$, then $e(G)(\chi)(v) := f(e_1(G)(\chi)(v))$
	\item if $e = \Diamon e_1$, then $e(G)(\chi)(v) := \sum_{u \in \Neigh{G}{v}}e_1(G)(\chi)(u)$
\end{itemize}

To express higher output arities, we agree that a GFMT $T$
of type $d \to r$ is expressible in MPLang if there exists a
tuple $(e_1, \dots ,e_{r})$ of expressions that
defines $T$ in the sense of Equation~\ref{eq:tuple_of_GFMTs}.
We further agree:
\begin{itemize}
	\item
		The constant $a$ will be used as a shorthand for the
		expression $a \Ones$.
	\item
		For any fixed function $f$, we denote by $f$-MPLang
		the language fragment of MPLang where
		all function applications apply $f$.
\end{itemize}

\begin{example} \label{exmplang}
	Continuing Example~\ref{exmpnn}, also $T_2$ and $\Tsum$
	can be expressed in
	MPLang, namely, $T_2$ as $\ReLU(P_2-P_1)+P_1$, and
	$\Tsum$ as $\Diamon P_1$.
\end{example}

\subsection{Equivalence} \label{secequiv}

Let $T_1$ and $T_2$ be MPNNs, or tuples of MPLang expressions, of
the same type $d\to r$.
\begin{itemize}
	\item
		We say that $T_1$ and $T_2$ are
		\emph{equivalent} if they express the same GFMT.
	\item
		For a class
		$\G$ of graphs and a subset $X$ of $\Rb^d$, we say that
		$T_1$ and $T_2$ are \emph{equivalent over $\G$ and $X$} if the
		GFMTs expressed by $T_1$ and $T_2$ are equal on every graph $G$
		in $\G$ and every $\chi \in \FeatX GdX$ (see
		Section~\ref{secprelim}).
\end{itemize}

Example~\ref{exmplang} illustrates the following general
observation:

\begin{proposition} \label{propeasy}
	For every MPNN $T$ there is an equivalent tuple of MPLang-expressions
	that apply, in function applications, only
	activation functions used in $T$.
\end{proposition}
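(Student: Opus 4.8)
The plan is to proceed by induction on the structure of an MPNN, showing that each MPNN layer can be simulated by a tuple of MPLang expressions, and then composing these simulations across layers. The key observation is that MPLang has exactly enough primitives to reproduce a single layer's computation: the $\Diamon$ operator handles the neighbor-summation $\sum_{u \in \Neigh{G}{v}} \chi(u)$, the constructs $ae$, $e+e$, and $\Ones$ (via $a\Ones$) build arbitrary affine combinations, and $f(e)$ applies the activation function. Crucially, the activation functions appearing in the MPLang expressions will be exactly those appearing in the MPNN layers, which is the quantifier the proposition demands.

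For the base case I would treat a single MPNN layer $L = (W_1, W_2, b, \sigma)$ of type $d \to r$. Its defining Equation~\ref{eq-layer} computes, in each output component $i$, the value $\sigma\bigl((W_1\chi(v))_i + (W_2 \sum_{u} \chi(u))_i + b_i\bigr)$. I would express this as the tuple $(e_1, \dots, e_r)$ where
\begin{equation*}
  e_i = \sigma\Bigl( \sum_{j=1}^d (W_1)_{ij} P_j + \sum_{j=1}^d (W_2)_{ij} \Diamon P_j + b_i \Ones \Bigr),
\end{equation*}
using the shorthand that scalar multiplication $a e$ and addition $e+e$ assemble the affine part, $\Diamon P_j$ produces the sum over neighbors of the $j$-th input feature, and the outer $\sigma(\cdot)$ is a single function application. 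One checks directly from the MPLang semantics that $e_i(G)(\chi)(v)$ equals the $i$-th component of $L(G)(\chi)(v)$ for every $G$, $\chi$, $v$. Note this single layer uses only the activation function $\sigma$ in its function applications, as required; and if $\sigma$ is the identity, then by the convention that $\id$-applications need not appear, we may drop the outer $f(\cdot)$ altogether.

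The inductive step handles the sequential composition $L_1;\dots;L_p$ that defines a full MPNN. Suppose the prefix $L_1;\dots;L_{k-1}$ is equivalent to a tuple $(e_1^{(k-1)}, \dots, e_{s}^{(k-1)})$ of MPLang expressions using only activation functions from those layers, where $s$ is the output arity of layer $L_{k-1}$ (equivalently the input arity of $L_k$). To compose with $L_k = (W_1, W_2, b, \sigma_k)$, I would substitute: in the single-layer expression for $L_k$ built above, replace each occurrence of the input-projection $P_j$ by the expression $e_j^{(k-1)}$. This substitution is syntactically legal in MPLang because the grammar is closed under the operations $a(\cdot)$, $(\cdot)+(\cdot)$, $f(\cdot)$, and $\Diamon(\cdot)$, so plugging a compound expression wherever $P_j$ stood yields another well-formed MPLang expression; and the resulting tuple uses only the activation functions of $L_1,\dots,L_k$. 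That the substitution correctly realizes function composition follows from the compositional (denotational) semantics of MPLang together with the definition of sequential composition $(T_1;T_2)(G)(\chi) = T_2(G)(T_1(G)(\chi))$.

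\textbf{The main obstacle} I anticipate is a subtle point about the $\Diamon$ operator under substitution. When I replace $P_j$ by a compound expression $e_j^{(k-1)}$ inside a subexpression $\Diamon P_j$, I must verify that $\Diamon e_j^{(k-1)}$ indeed computes the sum over neighbors of the correct intermediate feature produced by the earlier layers — that is, that $\Diamon$ commutes with the feature transformation in the intended way. This is true because MPLang's semantics is defined pointwise on nodes and $\Diamon e$ at $v$ simply reads off the value of $e$ at each neighbor $u$, with $e$ itself being evaluated on the \emph{original} feature map $\chi$; there is no hidden reordering, since each layer's computation is already folded into the expression $e_j^{(k-1)}$ before the $\Diamon$ is applied. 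Making this precise amounts to carefully unwinding the semantic definition, but involves no genuine difficulty beyond bookkeeping. The rest of the argument is routine verification from the definitions.
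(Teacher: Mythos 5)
Your proof is correct and takes essentially the same approach as the paper: both express each layer componentwise in MPLang as $\sigma\bigl(\sum_{j}(W_1)_{ij}P_j + \sum_{j}(W_2)_{ij}\Diamon P_j + b_i\bigr)$ and reduce multi-layer MPNNs to this via substitution of compound expressions for the $P_j$'s, which is exactly the closure under sequential composition that the paper invokes. Your careful verification that $\Diamon$ behaves correctly under substitution merely makes explicit a step the paper leaves implicit, and your observation about the activation functions matching those of $T$ is the same as the paper's.
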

\begin{proof}
	Since we can always substitute subexpressions of the form $P_i$ by
	more complex expressions, MPLang is certainly closed under
	sequential composition. It thus suffices to verify that single
	MPNN layers $L$, or even the separate ingredients of a layer
	are expressible in MPLang. For each output
	component of $L$ we devise a separate MPLang expression.
	We create an expression for the $j$-th component.
	Inspecting Equation~\ref{eq-layer}, we must argue for linear
	transformation; summation over neighbors; addition of a
	constant (component from the bias vector); and application of
	an activation function.

	Linear transformation appears when multiplying an $r \times d$ matrix $W$
	with a $d$-dimensional vector $\chi(v)$. Let $w_{k}$ be
	the value of $W_1$ at the $j$-th row and $k$-th column.
	The translation of the $j$-th component of $W\chi{v}$ is $w_{1}P_1 + \dots + w_{d}P_d$.

	The addition of the bias vector $b$ for the $j$-th component is
	the addition of $j$-th component $b_j$ to an expression $e$.
	The translation is then $e + b_j$.

	Summation over neighbors is a component-wise summation. The translation
	of the summation over the $j$-th component of the feature vectors of the neighbors
	of the current node is $\Diamon(P_j)$.

	Application of an activation function is provided by
	function application in MPLang.
\end{proof}

\section{From MPLang to MPNN under ReLU} \label{secrelu}

In Proposition~\ref{propeasy} we observed that MPLang readily
provides all the operators that are implicitly present in MPNNs.
MPLang, however, allows these operators to be combined
arbitrarily in expressions, whereas MPNNs have a more rigid
architecture.  Nevertheless, at least under the ReLU activation
function, we have the following strong result:

\begin{theorem}
	\label{thmrelu}
	Every GFMT expressible in ReLU-MPLang is also expressible as a
	ReLU-MPNN.
\end{theorem}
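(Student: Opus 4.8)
The plan is to proceed by structural induction on ReLU-MPLang expressions, maintaining as an invariant that every expression can be converted into a ReLU-MPNN. The subtlety is that an MPNN layer applies its activation function to *every* output component, whereas an MPLang expression is a single scalar feature that may mix activated and non-activated subterms freely. So a naive induction on expressions fails to compose: if $e_1$ is realized by an MPNN ending in a ReLU-layer and I want to form $a e_1$, I cannot simply scale the output, because the scaled value should not pass through another ReLU. The standard fix, which I would adopt, is to strengthen the induction hypothesis so that each expression $e$ is realized by a ReLU-MPNN whose final layer is an \emph{identity}-layer (recall that a ReLU-MPNN is permitted to use $\id$ in its last layer). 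This way the output of every sub-MPNN is an honest real number, available for further affine combination without unwanted clipping.

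\textbf{The base cases} are immediate: $\Ones$ is the constant MPNN $(0,0,1,\id)$, and $P_i$ is realized by the single identity-layer that projects onto the $i$-th coordinate. \textbf{For the inductive cases}, suppose $e_1$ (and $e_2$) are realized by ReLU-MPNNs $N_1$ (and $N_2$) each ending in an identity-layer. The operations $a e_1$ and $e_1+e_2$ are affine, so I can absorb them into a fresh identity final layer: for $a e_1$, replace the last layer's weight matrices by $a$ times themselves; for $e_1 + e_2$ I first run $N_1$ and $N_2$ ``side by side'' using parallel composition on a duplicated copy of the input features, then add the two scalar outputs in a final identity-layer. The message-passing case $\Diamon e_1$ is handled by appending one identity-layer with $W_1 = 0$ and $W_2 = 1$ (exactly the layer realizing $\Tsum$ in Example~\ref{exmpnn}); since summing over neighbors is the $W_2$-aggregation step, one extra identity-layer suffices and it again leaves us ending in $\id$.

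\textbf{The genuinely interesting case is $f(e_1)$ where $f = \ReLU$.} Here I take the MPNN $N_1$ for $e_1$, ending in an identity-layer, and I want to apply $\ReLU$ to its scalar output while restoring the invariant that the whole thing ends in an identity-layer. The trick, already foreshadowed by the paper's identity $x = \ReLU(x) - \ReLU(-x)$ used in reverse, is to replace the final identity-layer of $N_1$ by a ReLU-layer computing $\ReLU(e_1)$, and then append one further identity-layer (the ``copy'' layer $(1,0,0,\id)$) so that the composite again terminates in $\id$. Concretely, if $N_1$'s penultimate output feeds an identity-layer producing the value $e_1$, I change that layer's activation to $\ReLU$, obtaining $\ReLU(e_1)$, and cap it with an identity-layer. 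This is exactly where the ReLU restriction is essential: the construction only works because the function being applied is the very activation function the layers are allowed to use, so no approximation is needed.

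\textbf{The main obstacle} I anticipate is bookkeeping rather than conceptual depth: when I combine $N_1$ and $N_2$ for binary operations, the two MPNNs may have different numbers of layers, so I must pad the shorter one with identity-layers so that the parallel composition is well-formed (all intermediate arities must line up layer-by-layer). Padding with identity-layers is harmless for a ReLU-MPNN precisely because identity-layers are permitted throughout as a degenerate case (a ReLU-layer with a linear activation can always be simulated, or one simply observes that copying a value forward changes nothing), so the invariant is preserved. Once this padding convention is fixed, each inductive step is a short matrix manipulation, and assembling the cases yields a ReLU-MPNN for the full expression; tupling the resulting single-output MPNNs (via concatenation, again with padding) handles the general type $d \to r$, completing the induction and hence the theorem.
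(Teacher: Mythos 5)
Your overall strategy---structural induction on the expression, a single appended layer per operator, running sub-MPNNs side by side for $+$, and the identity $x = \ReLU(x)-\ReLU(-x)$ in the background---is essentially the paper's proof. However, one step is justified incorrectly as you state it, and it happens to be the load-bearing step. You claim that padding and interleaving with identity-layers is harmless because ``identity-layers are permitted throughout as a degenerate case'' or because ``copying a value forward changes nothing.'' Under the paper's definition, a ReLU-MPNN may use $\id$ only in its \emph{last} layer, so intermediate identity-layers are not permitted by fiat; and a ReLU-layer that merely copies a value forward computes $\ReLU(x)$, which destroys negative values---and your own invariant (every sub-MPNN ends in an id-layer) guarantees that intermediate values can be negative. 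Note that your construction manufactures intermediate id-layers at almost every inductive step: in the $\Diamon e_1$ case (the neighbor-sum cannot be absorbed into $N_1$'s final layer, since it would require two-hop aggregation, so a layer must be appended after the id-layer), in the $e_1+e_2$ case (the final id-layers of $N_1$ and $N_2$ become intermediate once the adding layer is appended), and in your padding of the shorter MPNN. So this cannot be waved through as a degenerate case.

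The repair is exactly the paper's Lemma~\ref{lem-idlayer}, which you gesture at with ``a ReLU-layer with a linear activation can always be simulated'' but never carry out: replace an id-layer $(W_1,W_2,b,\id)$ by the width-doubled ReLU-layer $(W_1,W_2,b,\ReLU) \mid (-W_1,-W_2,-b,\ReLU)$ and compensate in the \emph{following} layer by replacing its matrices $A,B$ with $(A \mid -A)$, $(B \mid -B)$. This is where $x=\ReLU(x)-\ReLU(-x)$ actually enters the argument---not, as in your sketch, in the $f(e_1)$ case, where switching the final activation to $\ReLU$ already suffices. Relatedly, the paper's padding device for equalizing layer counts (Lemma~\ref{lem-relu-concat}) relies on idempotence of ReLU, i.e., an extra copy-layer is safe because its inputs are outputs of a previous ReLU and hence nonnegative; with your id-ending invariant that argument is unavailable and you again need Lemma~\ref{lem-idlayer}. (A minor slip in the same spirit: in the $ae_1$ case you must scale the bias of the final layer as well as its weight matrices.) With this one lemma stated and applied where needed, your induction goes through and coincides with the paper's proof.
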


Crucial to proving results of this kind will be that the MPNN
architecture allows the construction of concatenations of MPNNs.
We begin by noting:

\begin{lemma} \label{lem-layer-concat}
	Let $\sigma$ be an activation function. The class of GFMTs
	expressible as a single $\sigma$-MPNN layer is
	closed under concatenation and under parallel composition.
\end{lemma}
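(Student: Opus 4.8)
The plan is to handle both closure properties with the same elementary device: since a single $\sigma$-layer applies only one nonlinearity, namely the componentwise map $\sigma$, after forming an affine combination of $\chi(v)$ and the neighbor sum $\sum_{u \in \Neigh{G}{v}} \chi(u)$, it suffices to assemble the affine parts of the two given layers into one larger affine part and then rely on the fact that $\sigma$ acts entry by entry. Concretely, I would write the two given single layers as $L_1 = (W_1^{(1)}, W_2^{(1)}, b^{(1)}, \sigma)$ and $L_2 = (W_1^{(2)}, W_2^{(2)}, b^{(2)}, \sigma)$, and in each case exhibit a single $\sigma$-layer of the correct type whose output is the prescribed concatenation.

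For concatenation, $L_1$ and $L_2$ share the input arity $d$ (so that $L_1 \mid L_2$ is defined) and have output arities $r_1$ and $r_2$. I would stack the layers vertically: for $k=1,2$ take $W_k$ to be the $(r_1+r_2)\times d$ matrix obtained by placing $W_k^{(1)}$ above $W_k^{(2)}$, and let $b$ be $b^{(1)}$ stacked above $b^{(2)}$. Evaluating the resulting layer at a node $v$, the argument of $\sigma$ is exactly the column vector whose top $r_1$ entries form the argument of $\sigma$ in $L_1$ and whose bottom $r_2$ entries form the argument in $L_2$; applying $\sigma$ componentwise then yields precisely $L_1(G)(\chi)(v) \mid L_2(G)(\chi)(v)$, as required.

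For parallel composition, the input arities $d_1, d_2$ differ, and the output should depend on the first $d_1$ components of $\chi$ through $L_1$ and on the last $d_2$ components through $L_2$. Here I would use a block-diagonal construction:
\[
W_k = \begin{pmatrix} W_k^{(1)} & 0 \\ 0 & W_k^{(2)} \end{pmatrix}, \qquad b = \begin{pmatrix} b^{(1)} \\ b^{(2)} \end{pmatrix},
\]
for $k=1,2$, where the off-diagonal blocks are zero matrices of the appropriate shapes. Since $\chi(v)$ splits as $\chi_1(v) \mid \chi_2(v)$ and, because summation is componentwise, the neighbor sum splits the same way, multiplying by the block-diagonal $W_k$ decouples the two halves: the top $r_1$ entries of the argument of $\sigma$ reproduce the argument of $L_1$ evaluated on $\chi_1$, and the bottom $r_2$ entries reproduce that of $L_2$ evaluated on $\chi_2$. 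Applying $\sigma$ componentwise once more delivers the wanted concatenation.

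The construction is essentially bookkeeping of matrix dimensions, so I do not anticipate a genuine obstacle; the one point that must not be glossed over is the role of the \emph{shared} activation function. The combination works precisely because both layers carry the same $\sigma$ and $\sigma$ is applied entry by entry, so applying it to the full stacked vector agrees with applying it separately to each block. Had the two layers used different activation functions, no single $\sigma$-layer could in general reproduce their concatenation, which is exactly why the hypothesis fixes one common $\sigma$.
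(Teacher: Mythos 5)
Your proof is correct and follows essentially the same route as the paper's: vertical stacking of the weight matrices and biases for concatenation, and block-diagonal matrices for parallel composition, with componentwise application of the shared $\sigma$ making the blocks decouple. Your closing remark on why a common activation function is essential is a welcome addition the paper leaves implicit.
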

\begin{proof}
	For parallel composition, we construct block-diagonal
	matrices from the matrices provided by the two layers.
	Let $L=(W_{1L}, W_{2L}, b_L, \sigma)$ and $K=(W_{1K}, W_{2K}, b_K, \sigma)$
	be two layers of type $d_L \to r_l$ and $d_K \to r_K$ respectively.
	The layer $J=(W_1,W_2,b,\sigma)$ expresses $L \parallel K$, with $W_1$ equal to
	\begin{math}
		\begin{pmatrix}
			W_{1L} & \vline & 0      \\
			\hline
			0      & \vline & W_{1K}
		\end{pmatrix}
	\end{math},
	and $W_2$ constructed similarly using $W_{2L}$ and $W_{2K}$. The vector $b$ is $b_K \mid b_L$.

	For concatenation, we can simply stack the matrices vertically.
	More formally, assume $d_K = d_L$, then $J$ expresses $L \mid K$, if $W_1$ is equal to
	\begin{math}
		\begin{pmatrix}
			W_{1L} \\
			\hline
			W_{1K}
		\end{pmatrix}
	\end{math}
	and $W_2$ is constructed similarly, using $W_{2L}$ and $W_{2K}$. The vector $b$ is again $b_K \mid b_L$.
\end{proof}

For $\sigma = \ReLU$,
we can extend the above Lemma to multi-layer MPNNs:

\begin{lemma} \label{lem-relu-concat}
	ReLU-MPNNs are closed under concatenation.
\end{lemma}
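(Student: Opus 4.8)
The plan is to run the two given ReLU-MPNNs \emph{side by side} as two parallel streams inside a single MPNN, fusing their layers one level at a time by means of Lemma~\ref{lem-layer-concat}. Let $M_1$ and $M_2$ be ReLU-MPNNs of types $d \to r_1$ and $d \to r_2$. Suppose first that they have the same number $N$ of layers, say $M_1 = L_1,\dots,L_N$ and $M_2 = K_1,\dots,K_N$. Since both read the same $d$-dimensional input, the first fused layer must be the \emph{concatenation} $L_1 \mid K_1$ (both layers reading all $d$ input components), which by Lemma~\ref{lem-layer-concat} is again a single layer; it outputs the first-layer state of $M_1$ followed by that of $M_2$. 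From then on the running state is the concatenation of the two streams' states, so each further level is the \emph{parallel composition} $L_i \parallel K_i$, again a single layer by Lemma~\ref{lem-layer-concat}, which feeds the first block of the state into $L_i$ and the second block into $K_i$. The final layer then delivers exactly $M_1(G)(\chi)(v) \mid M_2(G)(\chi)(v)$, as required.

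Two provisos must be discharged before this works: the two MPNNs must have equally many layers, and at each level the fused layers must share an activation function, since Lemma~\ref{lem-layer-concat} fuses only $\sigma$-layers for a common $\sigma$. For the activation functions, I will first \emph{normalize} both MPNNs to end in an $\id$-layer: an MPNN whose last layer is a ReLU-layer is extended by one extra layer $(I,0,0,\id)$, which merely copies the output and so leaves the computed GFMT unchanged, while turning the former last layer into an (allowed) intermediate ReLU-layer. After this, every intermediate layer is a ReLU-layer and every last layer is an $\id$-layer, so all fused layers -- the concatenation of the two first layers, the parallel compositions in between, and the parallel composition of the two $\id$ last layers -- are well-formed, and the whole construction is a genuine ReLU-MPNN.

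The remaining, and main, obstacle is equalizing the number of layers, which is delicate precisely because intermediate layers are forced to be ReLU-layers and $\ReLU$ is not the identity on negative inputs. The key observation that unlocks this is that the output of any ReLU-layer is componentwise nonnegative, so the layer $(I,0,0,\ReLU)$ acts as the identity on such an output. Hence, to lengthen the shorter MPNN I insert the required number of copies of $(I,0,0,\ReLU)$, of the appropriate width, \emph{immediately after its first layer}, which is a ReLU-layer whenever the MPNN has at least two layers; each inserted layer receives a nonnegative vector and reproduces it, so the GFMT is preserved and the last layer is untouched. The short base cases, where some $M_i$ is a single layer, are handled first by rewriting that single layer as an equivalent two-layer ReLU-MPNN -- using $z = \ReLU(z) - \ReLU(-z)$ to split an $\id$-layer, and appending $(I,0,0,\ReLU)$ to a ReLU-layer -- after which the padding argument applies. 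Combining the base-case expansion, the padding, the normalization, and the layer-wise fusion yields a single ReLU-MPNN equivalent to the concatenation $M_1 \mid M_2$.
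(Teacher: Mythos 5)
Your proposal is correct and follows essentially the same route as the paper's proof: equalize the number of layers by inserting padding layers that act as the identity (exploiting that ReLU-layer outputs are nonnegative, i.e., the idempotence of $\ReLU$), and then fuse the two networks as $(L_1 \mid K_1);(L_2 \parallel K_2);\dots;(L_N \parallel K_N)$ via Lemma~\ref{lem-layer-concat}. You additionally discharge two edge cases that the paper's terser proof leaves implicit --- the activation mismatch when one network ends in a ReLU-layer and the other in an $\id$-layer, and the single-layer $\id$ case handled via $z = \ReLU(z)-\ReLU(-z)$ --- which is a careful, correct refinement rather than a different approach.
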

\begin{proof}
	Let $L$ and $K$ be two ReLU-MPNNs.
	Since ReLU is idempotent, every $n$-layer ReLU-MPNN is
	equivalent to an $n+1$-layer ReLU-MPNN\@.  Hence we may assume
	that $L=L_1;\dots,L_n$ and $K=K_1;\dots;K_n$ have the same
	number of layers.
	Now $L \mid K = (L_1 \mid K_1); (L_2
		\parallel K_2); \dots ; (L_n \parallel K_n)$ if $n \geq 2$; if
	$n=1$, clearly $L \mid K = L_1 \mid K_1$.
	Hence, the claim follows from Lemma~\ref{lem-layer-concat}.
\end{proof}

Note that a ReLU-MPNN layer can only output positive numeric
values, since the result of ReLU is always positive.  This
explains why we must allow the identity function (id) in the last
layer of a ReLU-MPNN (see the end of Section~\ref{secmpnn}).
Moreover, we can simulate intermediate id-layers in a ReLU-MPNN,
thanks to the identity $x = \ReLU(x)-\ReLU(-x)$.  Specifically,
we have:

\begin{lemma} \label{lem-idlayer}
	Let $L$ be an id-layer and let $K$ be a $\sigma$-layer. Then
	there exists a ReLU-layer $L'$ and a $\sigma$-layer $K'$ such
	that $L;K$ is equivalent to $L';K'$.
\end{lemma}
\begin{proof}
	Let $L=(W_1,W_2,b,\id)$.  We put $$ L' = (W_1,W_2,b,\ReLU) \mid
		(-W_1,-W_2,-b,\ReLU) $$ which corresponds to a ReLU-layer by
	Lemma~\ref{lem-layer-concat}.  Let $K=(A,B,c,\sigma)$.
	Consider the block matrices $A'=(A|{-A})$ and $B'=(B|{-B})$
	(single-row block matrices, with two matrices stacked
	horizontally, not vertically).  Now for $K'$ we use $(A',B',c,\sigma)$.
\end{proof}

We now ready to prove Theorem~\ref{thmrelu}.
By Lemma~\ref{lem-relu-concat}, it suffices to focus on MPLang
expressions, i.e., GFMTs of output arity one.
So, our task is to construct, for every expression $e$ in
ReLU-MPLang, an equivalent ReLU-MPNN $E$.  However,
by Lemma~\ref{lem-idlayer}, we are free to use intermediate
id-layers in the construction of $E$.  We proceed by induction on
the structure of $e$. Consider the base cases where $e$ is of the form
$1$ and $P_i$ and assume $e$ is appropriate for input arity $d$.
\begin{itemize}
	\item If $e$ is of the form $1$, we set $E = (\vec{0},\vec{0},1,\identity)$
		with $\vec{0} = 0^{1 \times d}$.
	\item If $e$ is of the form $P_i$, we set $L = (W_1,\vec{0},0,\identity)$
		with $\vec{0} = 0^{1 \times d}$ and $W_1$ the $i$-th
		canonical basis vector of dimension $d$, i.e.,
		$W_1= (0, \dots, 0, 1, 0, \dots, 0)$ with $1$ in
		the $i$-th position.
\end{itemize}
Consider the inductive cases where $e$ is of one of the forms
$ae_1$, $e_1 + e_2$, $f(e_1)$ (with $f=\ReLU$), or $\Diamon e_1$.
By induction, we have MPNNs $E_1$ and $E_2$ for $e_1$ and $e_2$.
\begin{itemize}
	\item
		If $e$ is of the form $ae_1$, we set $E =
			E_1 ; (a,0,0,\id)$.
	\item
		If $e$ is of the form $e_1 + e_2$, we set $E= (E_1
			\mid E_2) ; ((1,1),(0,0),0,\id)$. Here, $E_1 \mid E_2$
		corresponds to a ReLU-MPNN by Lemma~\ref{lem-relu-concat}.
	\item
		If $e$ is of the form $f(e_1)$, we set $E= E_1 ; (1,0,0,f)$.
	\item
		If $e$ is of the form $\Diamon e_1$, we set $E=E_1;
			(0,1,0,\id)$.
\end{itemize}

\section{Arbitrary activation functions} \label{secont}

Theorem~\ref{thmrelu} only supports the ReLU function in MPLang
expressions. On the other hand, the equivalent MPNN then only
uses ReLU as well. If we allow arbitrary activation functions in
MPNNs, can they then simulate also MPLang expressions that apply
arbitrary functions? We can answer this question affirmatively,
under the assumption that graphs have bounded degree and feature
vectors come from a bounded domain.

\begin{theorem}
	\label{thmcont}
	Let $p$ and $d$ be natural numbers, let $\Grs_p$ be the class of
	graphs of degree at most $p$, and let $X \subset
		\Rb^{d}$ be bounded.  For every GFMT $T$ expressible in MPLang
	there exists an MPNN that is equivalent to $T$ over $\Grs_p$ and
	$X$.
\end{theorem}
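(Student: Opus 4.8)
The plan is to mirror the proof of Theorem~\ref{thmrelu}: reduce to output arity one via a closure-under-concatenation lemma, and then build an equivalent MPNN by induction on the structure of the MPLang-expression $e$. The base cases ($\Ones$, $P_i$) and the cases $ae_1$, $f(e_1)$, $\Diamon e_1$ go through essentially verbatim, now applying the arbitrary continuous function $f$ in a single $f$-layer $(1,0,0,f)$ rather than specializing to $\ReLU$. Moreover, in a general MPNN every layer may carry an arbitrary continuous activation, in particular the identity; hence a pass-through $(I,0,0,\id)$-layer is freely available, and we no longer need the $\ReLU$-specific devices of Lemmas~\ref{lem-relu-concat} and~\ref{lem-idlayer} (idempotence and $x=\ReLU(x)-\ReLU(-x)$) to pad depth or to move values across layers. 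The only genuinely new difficulty is the case $e_1+e_2$, where we must concatenate the MPNNs $E_1$ and $E_2$ obtained for $e_1$ and $e_2$: these may apply different activation functions, yet a single layer can apply only one function component-wise.

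First I would isolate the boundedness that the hypotheses provide. Concretely, over $\Grs_p$ and the bounded set $X$, every subexpression $e'$ of $e$ takes its values in a bounded subset of $\Rb$; that is, the set $\{\, e'(G)(\chi)(v) : G\in\Grs_p,\ \chi\in\FeatX{G}{d}{X},\ v\in V(G)\,\}$ is bounded. This follows by structural induction: $\Ones$ and $P_i$ are bounded because $X$ is bounded; $ae'$ and $e_1'+e_2'$ are affine combinations of bounded quantities; $\Diamon e'$ sums at most $p$ values of $e'$ since the degree is at most $p$; and for $f(e')$ the image of $e'$ has compact closure $K$, so $f(K)$ is compact because $f$ is continuous on all of $\Rb$, whence $f(e')$ is bounded. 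Consequently every intermediate value, and hence every pre-activation inside the MPNNs we construct, also ranges over a bounded set.

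The heart of the proof is then a range-shifting construction that lets a single continuous activation simulate two different ones on disjoint bounded ranges. Suppose an $f$-layer $L$ of type $d\to r_1$ and a $g$-layer $K$ of type $d\to r_2$ are to be concatenated into one layer of type $d\to r_1+r_2$. Let $[a_1,b_1]$ bound all pre-activations of $L$ and $[a_2,b_2]$ those of $K$, and choose a constant $c$ so large that $[a_2+c,\,b_2+c]$ is disjoint from $[a_1,b_1]$. I would build the layer with the vertically stacked matrices of Lemma~\ref{lem-layer-concat}, but add $c$ to every bias entry coming from $K$, and use the single activation $h$ defined to equal $f$ on $[a_1,b_1]$ and to equal $z\mapsto g(z-c)$ on $[a_2+c,\,b_2+c]$, interpolated linearly across the gap and held constant outside, so that $h$ is continuous on $\Rb$. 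The $K$-rows then land exactly in the shifted interval, where $h$ undoes the shift and reproduces $g$, while the $f$-rows are untouched; since the designated intervals are never left, the simulation is exact rather than approximate. Iterating this over layers, after padding $E_1$ and $E_2$ to equal depth with pass-through $\id$-layers, yields closure under concatenation over $\Grs_p$ and $X$, and the induction concludes exactly as in Theorem~\ref{thmrelu}. I expect the main obstacle to be precisely this construction: one must simultaneously verify that the separating constant $c$ can be drawn from the boundedness lemma, that the combined activation $h$ is globally continuous, that it reproduces both $f$ and $g$ exactly on their operative ranges (so that we obtain genuine equivalence, not mere approximation), and that all such ranges remain bounded as the construction is iterated up the expression tree.
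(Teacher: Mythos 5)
Your proposal is correct and takes essentially the same approach as the paper: the paper likewise reduces Theorem~\ref{thmcont} to the argument for Theorem~\ref{thmrelu}, with the single new ingredient being Lemma~\ref{lem-conc-cont}, which---exactly as in your range-shifting construction---uses the degree bound $p$ and the boundedness of $X$ to bound all pre-activation values, shifts the bias vectors so that the two layers' operative input ranges become disjoint intervals, and glues the two activation functions into one continuous $\sigma'$ that reproduces each of them exactly on its shifted range. The only cosmetic differences are that the paper shifts both ranges (into $]-\infty,-1]$ and $[1,\infty[$) rather than shifting only one by a constant $c$, and that it bounds the pre-activations directly via the functions $\lambda^k_i$ rather than first proving boundedness of all subexpression values.
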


The above theorem can be proven exactly as Theorem~\ref{thmrelu},
once we can deal with the concatenation of two MPNN layers with
possibly different activation functions.  The following result
addresses this task:

\begin{figure}
	\centering
	\begin{subfigure}{0.47\textwidth}
		\centering
		\begin{tikzpicture}
			\begin{axis}[
					xlabel = \(x\),
					xmin=-1.1, xmax=3.2,
					ylabel = \(\sigma_1\),
					ymin=-1.1, ymax=1.2,
					axis lines = middle,
					width =\textwidth,
					height = 5cm
				]
				\addplot [
					domain = -1:3,
					color = red
				] {
					tanh(x)
				};
			\end{axis}
		\end{tikzpicture}
	\end{subfigure}
	\quad
	\begin{subfigure}{0.47\textwidth}
		\centering
		\begin{tikzpicture}
			\begin{axis}[
					xlabel = \(x\),
					xmin=-2.1, xmax=1.2,
					ylabel = \(\sigma_2\),
					ymin=-2.1, ymax=1.2,
					axis lines = middle,
					width = \textwidth,
					height = 5cm
				]
				\addplot [
					domain = -2:1,
					color = blue
				] {
					x
				};
			\end{axis}
		\end{tikzpicture}
	\end{subfigure}
	\begin{subfigure}{0.95\textwidth}
		\centering
		\begin{tikzpicture}
			\begin{axis}[
					xlabel = \(x\),
					xmin=-5.1, xmax=4.1,
					ylabel = \(\sigma'\),
					ymin=-2.1, ymax=1.2,
					axis lines = middle,
					width = \textwidth,
					height = 5cm
				]
				\addplot [
					domain = -5:-1,
					color = red
				] {
					tanh(x+3+1)
				};
				\addplot [
					domain = 1:4,
					color = blue
				] {
					x - 2 - 1
				};
				\addplot [domain = -1:1] {
					-1.5*x - 0.5
				};
			\end{axis}
		\end{tikzpicture}
	\end{subfigure}
	\caption{Illustration of the proof of
		Lemma~\ref{lem-conc-cont}.}
	\label{fig:activation_function_merging}
\end{figure}
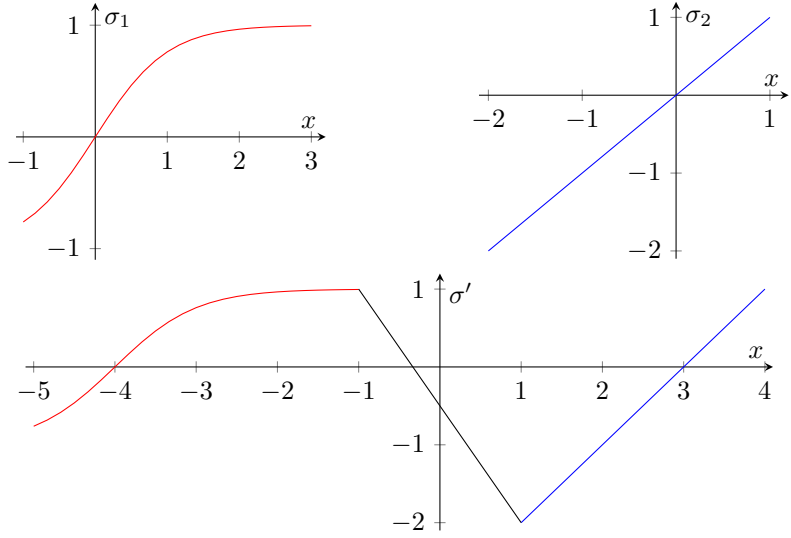

\begin{lemma} \label{lem-conc-cont}

	Let $L$ and $K$ be MPNN layers of type $d_L \to r_L$ and $d_K \to
		r_k$, respectively. Let $X_L \subseteq \Rb^{d_L}$ and $X_K
		\subseteq \Rb^{d_K}$ be bounded, and let $p$ be a natural number.
	There exist two MPNN layers $L'$ and $K'$ such that
	\begin{enumerate}
		\item $L'$ and $K'$ use the same activation function;
		\item $L'$ is equivalent to $L$ over $\Grs_p$ and $X_L$;
		\item $K'$ is equivalent to $K$ over $\Grs_p$ and $X_K$.
	\end{enumerate}

\end{lemma}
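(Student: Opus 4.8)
The plan is to exploit the boundedness assumptions to confine each layer's activation to a compact operating interval, and then to merge the two (possibly different) activation functions $\sigma_L$ and $\sigma_K$ into a single continuous function by placing these operating intervals in disjoint regions of the real line, following the picture in Figure~\ref{fig:activation_function_merging}.

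First I would establish that the argument fed to the activation function of each layer ranges over a bounded set. Fix $L=(W_{1L},W_{2L},b_L,\sigma_L)$. For any $G\in\Grs_p$ and any $\chi\in\FeatX{G}{d_L}{X_L}$, the preactivation at a node $v$ is $W_{1L}\chi(v)+W_{2L}\sum_{u\in\Neigh{G}{v}}\chi(u)+b_L$. Since $\chi(v)\in X_L$ with $X_L$ bounded, and since $v$ has at most $p$ neighbors, the inner sum is a sum of at most $p$ vectors from $X_L$ and hence lies in a bounded set; applying the fixed linear maps and adding $b_L$ keeps everything in a bounded subset of $\Rb^{r_L}$ that depends only on $W_{1L},W_{2L},b_L,X_L$ and $p$, not on $G$ or $\chi$. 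Consequently there is a fixed compact interval $I_L\subset\Rb$ containing every component of every preactivation of $L$; symmetrically, the preactivations of $K$ all land in a compact interval $I_K$.

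Next I would use the bias to translate these operating intervals apart. Choose scalars $c_L,c_K\in\Rb$ so that $I_L+c_L$ lies entirely to the left of $I_K+c_K$ with a positive gap between them. I then define a single continuous $\sigma'$ by $\sigma'(s):=\sigma_L(s-c_L)$ for $s\in I_L+c_L$, by $\sigma'(s):=\sigma_K(s-c_K)$ for $s\in I_K+c_K$, and by interpolating linearly across the gap (extending by constants outside, which is immaterial). Finally I set $L'=(W_{1L},W_{2L},b_L+c_L\mathbf{1},\sigma')$ and $K'=(W_{1K},W_{2K},b_K+c_K\mathbf{1},\sigma')$, where $\mathbf{1}$ is the all-ones offset. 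Adding $c_L$ to every component of $b_L$ shifts each preactivation of $L$ by $c_L$, so it lands in $I_L+c_L$, where $\sigma'(\,\cdot\,)=\sigma_L(\,\cdot\,-c_L)$ reproduces the original output exactly; the same holds for $K'$. This yields conditions (2) and (3), and condition (1) holds since $L'$ and $K'$ both use $\sigma'$.

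The subtlety to watch is that a single layer carries only one activation function, applied componentwise; hence $\sigma'$ must simultaneously reproduce $\sigma_L$ on the operating ranges of all $r_L$ output components of $L$. This forces me to take $I_L$ as one interval covering all these ranges and to use a single scalar shift $c_L$ added uniformly to every component of $b_L$, so that all components are translated into $I_L+c_L$ together. Granting this, I expect the boundedness argument of the first step to do the real work, since it is what confines each activation to a compact set and thereby frees us to redefine it elsewhere; the shift-and-splice then goes through routinely, with continuity of $\sigma'$ guaranteed by matching the finite endpoint values of $\sigma_L$ and $\sigma_K$ across the interpolating gap, and well-definedness guaranteed by the disjointness of $I_L+c_L$ and $I_K+c_K$.
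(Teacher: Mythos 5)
Your proof is correct and follows essentially the same route as the paper's: bound the preactivation range of each layer uniformly over $\Grs_p$ and the bounded feature domain (the paper does this via the functions $\lambda^k_i$ per output component and degree), translate the two operating ranges apart by adding a single scalar shift to every component of the bias, and splice $\sigma_L$ and $\sigma_K$ into one continuous $\sigma'$ by interpolating across the gap. The only cosmetic difference is that the paper anchors the shifted ranges at $(-\infty,-1]$ and $[1,\infty)$ rather than at two translated compact intervals; your handling of the single-scalar-shift subtlety and of nodes with fewer than $p$ neighbors (including degree $0$) is, if anything, slightly more careful than the paper's.
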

\begin{proof}

	Let $L=(W_{1L}, W_{2L}, b_L,  \sigma_L)$ and $K=(W_{1K}, W_{2K},
		b_K,  \sigma_K)$. Let $w_{1,i}$, $w_{2,i}$ and $b_i$ be the
	$i$-th row of $W_{1L}$, $W_{2L}$ and $b_L$ respectively. For
	each $i \in \{ 1, \dots , r_L\}$ and for any $k \in \{1, \dots,
		p\}$ consider the function
	$$ \lambda^{k}_{i}: \Rb^{(k+1)d_L} \to \Rb: (\vec{x_0},
		\vec{x_1}, \dots, \vec{x_k}) \mapsto w_{1, i} \cdot
		\vec{x_0} + w_{2,i} \cdot \vec{x_1} + \dots + w_{2,i}
		\cdot \vec{x_k} + b_i.
	$$
	Then for any $G \in \Grs_p$, any $\chi \in \FeatX{G}{d}{X_L}$,
	and $v \in V(G)$, each component of $L(G)(\chi)(v)$ will
	belong to the image of some function $\lambda^{k}_{i}$ on
	$X_{L}^{k+1}$, with $k$ the degree of $v$. Since
	$X_{L}^{k+1}$ is bounded and $\lambda^{k}_{i}$ is
	continuous, these images are also bounded and their finite
	union over $i \in \{ 1, \dots , r\}$ and $k \in \{1, \dots,
		p\}$ is also bounded. Let $Y_1$ be this union and let $M =
		\max Y_1$.

	For $K$ we can similarly define the functions $\kappa^{k}_{i}$
	and arrive at a bounded set $Y_K \subseteq \Rb$. We then define
	$m=\min Y_2$.

	We will now construct a new activation function $\sigma'$. First
	define the functions $\sigma_L'(x):=\sigma_{L}(x + M_i + 1)$
	for $x \in ]-\infty, -1]$ and $\sigma_K'(x):=\sigma_{K}(x - m_i
		- 1)$ for $x \in [1, \infty[$. Notice how $\sigma_L'$ is simply
	$\sigma_L$ shifted to the left so that its highest possible
	input value, which is $M$, aligns with $-1$. Similarly,
	$\sigma_K'$ is simply $\sigma_K$ shifted to the right so that
	its lowest possible input value, which is $m$, aligns with $1$.
	We then define $\sigma'$ to be any continuous function that
	extends both $\sigma_L'$ and $\sigma_K'$. An example of this
	construction can be seen in
	Figure~\ref{fig:activation_function_merging} with
	$\sigma_1=\tanh$, $M = 3$, $\sigma_2$ the identity, and $m
		= -2$.

	We also construct new bias vectors, obtained by shifting $b_L$ and
	$b_K$ left and right respectively to provide appropriate inputs
	for $\sigma'$. Specifically, we define $b_L' := b_L - (M+1)^{r
				\times 1}$ and $b_K' := b_K + (m+1)^{r \times 1}$.

	Finally, we
	can set $L'=(W_{1L}, W_{2L}, b_L',  \sigma')$ and
	$K'=(W_{1K}, W_{2K}, b_K',  \sigma')$ as desired.
\end{proof}

Thanks to the above lemma, Lemma~\ref{lem-layer-concat} remains
available to concatenate layers.  The part of
Lemma~\ref{lem-layer-concat} that deals with parallel composition
(which is needed to prove closure under concatenation for
multi-layer MPNNs) must be slightly adapted as follows.  It
follows immediately from Lemma~\ref{lem-conc-cont} above and the
original Lemma~\ref{lem-layer-concat}.

\begin{lemma}
	Let $L$ and $K$ be MPNN layers of type $d_L \to r_L$ and
	$d_K \to r_k$, respectively. Let $X_L \subseteq \Rb^{d_L}$ and
	$X_K \subseteq \Rb^{d_K}$ be bounded, and let $p$ be a natural number.
	Let $X = X_L \times X_K \subseteq \Rb^{d_L + d_K}$.
	There exists an MPNN layer that is equivalent to
	$L \parallel K$ over $\Grs_p$ and $X$.
\end{lemma}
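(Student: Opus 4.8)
The plan is to combine the two preceding lemmas directly, with essentially no new work. First I would apply Lemma~\ref{lem-conc-cont} to the pair $L$, $K$, together with the bounded domains $X_L$, $X_K$ and the degree bound $p$. This yields two layers $L'$ and $K'$ that share a single activation function, call it $\sigma'$, with $L'$ equivalent to $L$ over $\Grs_p$ and $X_L$, and $K'$ equivalent to $K$ over $\Grs_p$ and $X_K$. Since $L'$ and $K'$ are now both $\sigma'$-layers, the parallel-composition part of Lemma~\ref{lem-layer-concat} applies, so $L' \parallel K'$ is itself expressible as a single $\sigma'$-layer $J$. This $J$ is the layer I would put forward as the witness for the claim; it remains only to check that it computes the right thing on the restricted domain.

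So the second step is to verify that $J$ (equivalently, $L' \parallel K'$) is equivalent to $L \parallel K$ over $\Grs_p$ and $X = X_L \times X_K$. Fix $G \in \Grs_p$ and $\chi \in \FeatX{G}{d_L+d_K}{X}$. By the definition of parallel composition, $(L \parallel K)(G)(\chi)(v) = L(G)(\chi_1)(v) \mid K(G)(\chi_2)(v)$, where $\chi_1$ and $\chi_2$ are the projections of $\chi$ onto its first $d_L$ and last $d_K$ components; the same identity holds for $L' \parallel K'$. The key observation — and the only place where the product structure of $X$ is used — is that the image of $\chi$ lying in $X_L \times X_K$ forces the image of $\chi_1$ into $X_L$ and that of $\chi_2$ into $X_K$, so that $\chi_1 \in \FeatX{G}{d_L}{X_L}$ and $\chi_2 \in \FeatX{G}{d_K}{X_K}$.

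With this in place I can invoke the two equivalences from Lemma~\ref{lem-conc-cont}: since $G \in \Grs_p$ and $\chi_1$ takes values in $X_L$, we have $L(G)(\chi_1)(v) = L'(G)(\chi_1)(v)$, and since $\chi_2$ takes values in $X_K$, we have $K(G)(\chi_2)(v) = K'(G)(\chi_2)(v)$. Concatenating these two equalities gives $(L \parallel K)(G)(\chi)(v) = (L' \parallel K')(G)(\chi)(v)$ at every node $v$, which is precisely the required equivalence over $\Grs_p$ and $X$. I expect no genuine obstacle here: all the real content sits in Lemma~\ref{lem-conc-cont}, and the present argument is pure bookkeeping. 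The single point worth stating explicitly is that the equivalences supplied by Lemma~\ref{lem-conc-cont} hold only on $X_L$ and $X_K$ separately, so the restriction to the \emph{product} domain $X = X_L \times X_K$ is essential — it is exactly what guarantees that the two projected feature maps remain within their respective admissible domains.
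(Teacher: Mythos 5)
Your proposal is correct and is essentially the paper's own proof: the paper dispatches this lemma in a single line, noting that it ``follows immediately from Lemma~\ref{lem-conc-cont} above and the original Lemma~\ref{lem-layer-concat}'', which is precisely your two-step argument of first unifying the activation functions and then invoking the block-diagonal construction for parallel composition. Your explicit check that the product domain $X = X_L \times X_K$ forces the projected feature maps $\chi_1$ and $\chi_2$ into $X_L$ and $X_K$ respectively is exactly the bookkeeping the paper leaves implicit.
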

\qed

\newcommand{\setF}{\mathcal{F}}
A slightly stricter version of Theorem~\ref{thmcont} can be proven for all MPLang
expressions that are \emph{addition-free}, i.e., do not use the $+$ operator.
We will generalize the notion of $\sigma$-MPLang expressions and $\sigma$-MPNNs
to $\setF$-MPLang and $\setF$-MPNNs, for a set of continuous functions $\setF$.
Indeed, the following proof follows directly from the proof of Theorem~\ref{thmrelu}.

\begin{proposition}
	Any addition-free $\GNNLang$ expression $e$ using the functions $\setF$
	has an equivalent $\mathcal{G}$-$\GNN$ with $\mathcal{G} = \setF \cup \{\identity\}$.
	\label{prop-mplang-no-plus-mpnn}
\end{proposition}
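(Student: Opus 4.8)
The plan is to run the same structural induction on $e$ as in the proof of Theorem~\ref{thmrelu}, while exploiting the addition-free restriction to discard everything in that proof that was specific to $\ReLU$. The key observation is that, once the $+$ production is removed from the grammar, an addition-free expression is built solely from the unary operations $a(\cdot)$, $f(\cdot)$ and $\Diamon(\cdot)$ applied on top of a single atom $\Ones$ or $P_i$; syntactically it is a chain, with no branching. In particular, the case $e = e_1 + e_2$ never occurs, and this was precisely the only inductive case in Theorem~\ref{thmrelu} that invoked concatenation of MPNNs, and hence Lemmas~\ref{lem-relu-concat} and~\ref{lem-idlayer} together with the idempotence of $\ReLU$. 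Consequently the whole construction collapses to plain sequential composition of single layers, which is available for any activation functions whatsoever.

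Concretely, I would interpret a $\mathcal{G}$-$\GNN$ as an MPNN every layer of which carries an activation function drawn from $\mathcal{G}$, and then replay the construction of Theorem~\ref{thmrelu} verbatim, simply omitting the $+$ case. For the base cases $e = \Ones$ and $e = P_i$, and for the inductive cases $e = a e_1$ and $e = \Diamon e_1$, the layers are exactly the $\id$-layers $(\vec{0},\vec{0},1,\id)$, $(W_1,\vec{0},0,\id)$, $(a,0,0,\id)$ and $(0,1,0,\id)$ of that proof; since $\id \in \mathcal{G}$, these are legitimate $\mathcal{G}$-layers. For the case $e = f(e_1)$, where $f$ is now an arbitrary function of $\setF$ rather than $\ReLU$, I would append to the inductively constructed MPNN $E_1$ the single layer $(1,0,0,f)$; since $f \in \setF \subseteq \mathcal{G}$, this is again a $\mathcal{G}$-layer. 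Because MPNNs are closed under sequential composition --- one merely concatenates the two layer sequences --- appending a $\mathcal{G}$-layer to a $\mathcal{G}$-$\GNN$ yields a $\mathcal{G}$-$\GNN$, so the induction preserves the desired form, and correctness of each step is immediate from the layer and $\GNNLang$ semantics exactly as in Theorem~\ref{thmrelu}.

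There is no genuine obstacle here; the entire content of the argument is the recognition that addition-freeness removes the sole source of concatenation. The one point worth flagging is a simplification relative to Theorem~\ref{thmrelu}: there, intermediate $\id$-layers had to be simulated inside a $\ReLU$-MPNN through Lemma~\ref{lem-idlayer}, because an honest $\ReLU$-MPNN admits $\id$ only in its final layer. Here $\id$ is a bona fide element of $\mathcal{G}$, so $\id$-layers may be placed at any position, and neither that simulation nor any appeal to the special algebra of $\ReLU$ is required.
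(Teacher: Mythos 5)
Your proof is correct and matches the paper's intent exactly: the paper offers no separate argument for this proposition, stating only that it ``follows directly from the proof of Theorem~\ref{thmrelu},'' and your replay of that induction---dropping the $+$ case and thereby all appeals to concatenation (Lemmas~\ref{lem-relu-concat} and~\ref{lem-idlayer}), with $\id$-layers now legitimate since $\id \in \mathcal{G}$---is precisely the reason that claim holds. Your explicit identification of addition as the sole source of concatenation in the Theorem~\ref{thmrelu} construction is a correct and welcome elaboration of what the paper leaves implicit.
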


Additionally, if we neither allow the $\Diamon$ operator
(called a \emph{summation-free} expression),
we get an even stricter version of the result

\begin{proposition}
	Any addition-free, summation-free $\GNNLang$ expression $e$ using the functions $\setF$
	has an equivalent $\setF$-MPNN.
	\label{prop-mplang-no-plus-diamon-mpnn}
\end{proposition}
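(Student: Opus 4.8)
The plan is to mimic the structural induction in the proof of Theorem~\ref{thmrelu}, but restricted to the four constructs available in an addition-free, summation-free expression — the base cases $1$ and $P_i$, scalar multiplication $ae_1$, and function application $f(e_1)$ with $f \in \setF$ — and to replace the fixed activation $\ReLU$ by the function $f$ actually applied at each step. Because neither $\Diamon$ nor $+$ occurs, the two sources of $\id$-layers present in the proof of Theorem~\ref{thmrelu} disappear: the message-passing layer $(0,1,0,\id)$ used for $\Diamon e_1$, and the combining layer $((1,1),(0,0),0,\id)$ used for $e_1 + e_2$ (together with the concatenation of Lemma~\ref{lem-relu-concat}). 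Consequently the only $\id$-layers produced by the naive construction come from the base cases and from scalar multiplication. The entire construction is then a single chain of single-output layers, each with $W_2 = \vec 0$, so no message-passing layer is ever needed.

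First I would set up the induction so as to maintain the invariant that the MPNN $E$ built for $e$ is a genuine $\setF$-MPNN — every layer except the last uses a function from $\setF$, and the last layer uses a function from $\setF$ or $\id$ — while additionally recording whether the last layer of $E$ is an $\id$-layer or an $\setF$-layer. The base cases produce the single $\id$-layers $(\vec 0, \vec 0, 1, \id)$ and $(W_1, \vec 0, 0, \id)$ exactly as in Theorem~\ref{thmrelu}, each trivially satisfying the invariant with an $\id$ last layer.

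The heart of the argument is the two inductive steps, where the point is to never let an $\id$-layer become an interior layer. For $e = f(e_1)$: if the last layer of $E_1$ is the $\id$-layer $(W_1, \vec 0, b, \id)$ — which computes the affine value $W_1 x + b = e_1(G)(\chi)(v)$ — I would simply overwrite its activation with $f$, obtaining $(W_1, \vec 0, b, f)$, which outputs $f(e_1(\dots))$ as required; if instead the last layer of $E_1$ already uses some $g \in \setF$, I append $(1,0,0,f)$, whereupon the old last layer becomes a legitimate intermediate $\setF$-layer. Either way the new last layer is an $\setF$-layer. For $e = ae_1$: if the last layer of $E_1$ is the $\id$-layer $(W_1, \vec 0, b, \id)$, I rescale it to $(aW_1, \vec 0, ab, \id)$, which outputs $a\cdot e_1(\dots)$ and is still an $\id$ last layer; if the last layer of $E_1$ uses some $g \in \setF$, I append $(a,0,0,\id)$ as the new $\id$ last layer, again turning the former last layer into an intermediate $\setF$-layer. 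Appending a second $\id$-layer in the first subcase would be fatal (it would leave an interior $\id$-layer), which is precisely why rescaling, rather than appending, is used there.

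The main obstacle — and the only place the summation-free and addition-free hypotheses are really used — is exactly this bookkeeping that keeps every $\id$-layer out of the interior. It works because an $\id$-layer computes an affine map, so a subsequent scalar multiplication can be folded into it by scaling $W_1$ and $b$, and a subsequent function application can be absorbed by overwriting $\id$ with $f$; neither folding is available for the $\id$-layers forced by $\Diamon$ or by the addition-combining layer, which is exactly why Proposition~\ref{prop-mplang-no-plus-mpnn} can only guarantee a $(\setF \cup \{\id\})$-MPNN whereas the present, stronger statement guarantees a pure $\setF$-MPNN. I would close by observing that each merge leaves all earlier layers untouched, so a routine structural induction delivers the claim.
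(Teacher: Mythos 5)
Your proof is correct and essentially the same as the paper's: the paper performs the identical structural induction with the identical case split on whether the last layer $L=(W_1,W_2,b,\sigma)$ of $E_1$ has $\sigma=\id$ (in which case it folds the scalar in, replacing $L$ by $(aW_1,aW_2,ab,\id)$, or overwrites the activation, replacing $L$ by $(W_1,W_2,b,f)$) or $\sigma\in\mathcal{F}$ (in which case it appends a fresh layer). If anything, your version is slightly cleaner: in the append subcase of $f(e_1)$ you correctly use $E_1;(1,0,0,f)$, whereas the paper's text writes $E_1;(1,0,0,\identity)$, which appears to be a typo, and your observation that $W_2=0$ throughout (so the paper's $aW_2$ is vacuous) is a valid simplification enabled by summation-freeness.
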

\begin{proof}
	By induction on the structure of $e$, constructing for each $e$ an equivalent MPNN $E$.
	For the base cases we refer to the proof of Theorem~\ref{thmrelu}.
	In the inductive cases we $e$ is of the form $ae_1$ or $f(e_1)$.
	By induction, we have a $\setF$-MPNN $E_1$ that is equivalent to $e_1$
	and let $L = (W_1, W_2, b, \sigma)$ be the last layer of $E_1$.

	If $e$ is of the form $a e_1$ and $\sigma$ is the identity function,
	$E$ is obtained from $E_1$ by replacing the last layer by $(aW_1, aW_2, ab, \identity)$.
	If $\sigma$ is not the identity, we set $E = E_1;(a,0,0,\identity)$.

	If $e$ is of the form $f(e_1)$ and $\sigma$ is the identity,
	$E$ is obtained from $E_1$ by replacing the last layer by $(W_1, W_2, b, f)$.
	If $\sigma$ is not the identity, we set $E = E_1;(1,0,0,\identity)$.
\end{proof}

\section{Approximation by ReLU-MPNNs} \label{secapprox}

Theorem~\ref{thmcont} allows the use of arbitrary activation
functions in the MPNN simulating an MPLang expression; these
activation functions may even be different from the ones applied
in the expression (see the proof of Lemma~\ref{lem-conc-cont}).
What if we insist on MPNNs using a fixed activation function?  In
this case we can still recover our result, if we allow
approximation.  Moreover, we must slightly strengthen our
assumption of feature vectors coming from a bounded domain, to
coming from a compact domain.\footnote{A subset of $\Rb$ or
	$\Rb^d$ is called compact if it is bounded and closed in the
	ordinary topology.}

We will rely on a classical result
in the approximation theory of neural networks
\cite{leshno-approx-nn,pinkus-survey}.\footnote{The stated
	Density Property actually holds not just for ReLU, but for any
	nonpolynomial continuous function.}  In order to
recall this result, we recall that the uniform distance between two
continuous functions $g$ and $h$ from $\Rb$ to $\Rb$ on a compact
domain $Y$ equals $\rho_Y(g,h)=\sup_{x\in Y}|g(x)-h(x)|$.

\begin{densprop}
	Let $Y$ be a compact subset of $\Rb$, let $f: \Rb \to \Rb$ be
	continuous on $Y$, and let $\epsilon > 0$ be a real
	number. There exists a positive integer $n$ and
	real coefficients $a_i,b_i,c_i$, for $i=1,\dots,n$, such that
	$\rho_Y(f, f') \leq \epsilon$, where $f'(x)=\sum_{i = 1}^{n} c_i
		\ReLU (a_i x - b_i)$.
\end{densprop}

We want to extend the notion of uniform distance to GFMTs expressed
in MPLang.  For any MPLang expression $e$ appropriate for input
arity $d$, any class $\G$ of graphs, and any subset
$X \subseteq \Rb^d$, the \emph{image} of $e$ over $\G$ and $X$
is defined as the set $$ \{e(G)(\chi)(v) : G \in \G \ \& \
	\chi \in \FeatX GdX \ \& \ v \in V(G)\}. $$
It is a subset of $\Rb$.  We observe:
\begin{lemma} \label{lem-compact}
	For any natural number $p$ and compact $X$, the image of $e$ over
	$\Grs_p$ and $X$ is contained in a compact set.
\end{lemma}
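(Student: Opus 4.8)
The plan is to prove the slightly stronger statement that the image of $e$ over $\Grs_p$ and $X$ is \emph{bounded}; since every bounded subset of $\Rb$ is contained in a compact interval $[-R,R]$, this immediately yields the claim as stated. To establish boundedness, I would proceed by induction on the structure of $e$, proving that each subexpression $e'$ (which is again appropriate for input arity $d$) has bounded image over $\Grs_p$ and $X$.

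For the base cases: if $e' = \Ones$, its image is the singleton $\{1\}$; if $e' = P_i$, its image is contained in the projection of $X$ onto its $i$-th coordinate, which is bounded because $X$ is. For the inductive cases I would argue as follows. If $e' = a e_1$, the image is $a$ times the image of $e_1$, hence bounded. If $e' = e_1 + e_2$, the image is contained in the sumset of the two images, and a sum of two bounded sets is bounded. If $e' = f(e_1)$, I would invoke continuity: by induction the image of $e_1$ lies in some compact interval $I$, and since $f$ is continuous, $f(I)$ is compact and therefore bounded, and it contains the image of $f(e_1)$.

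The crucial case — and the one where the hypotheses genuinely enter, so I expect it to be the main obstacle — is $e' = \Diamon e_1$. Here each value $e'(G)(\chi)(v) = \sum_{u \in \Neigh Gv} e_1(G)(\chi)(u)$ is a sum of values drawn from the image of $e_1$, and the number of summands is the degree of $v$, which is at most $p$ because $G \in \Grs_p$. Hence if $B$ bounds the image of $e_1$, then $pB$ bounds the image of $\Diamon e_1$. This is precisely where the bounded degree is indispensable: without a uniform degree bound the neighbor-sum could grow without limit even for a fixed feature domain, so the image would fail to be bounded. The remaining inductive cases are routine closure properties of bounded sets — under scaling, pairwise addition, continuous images, and finite sums — and the compactness of $X$ is in fact used only to guarantee boundedness of the base images, boundedness of $X$ already sufficing.
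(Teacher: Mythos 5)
Your proposal is correct and follows essentially the same structural induction as the paper, with the same case analysis; replacing ``contained in a compact set'' by ``bounded'' is an equivalent reformulation for subsets of $\Rb$, and your handling of the crucial $\Diamon$ case (at most $p$ summands, hence bound $pB$) matches the paper's use of the degree bound, where the paper writes the image as contained in the compact sumset $\{y_1+\dots+y_p \mid y_i \in Y_1 \cup \{0\}\}$. Your closing observation that boundedness of $X$ already suffices for this lemma is accurate as well.
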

\begin{proof}
	By induction on the structure of a $e$. For the inductive cases
	we assume the images of $e_1$ and $e_2$ to be contained in the
	compact sets $Y_1, Y_2 \subset \Rb$ respectively.

	\begin{itemize}
		\item If $e$ is of the form $1$, the image of $e$ is
			$\{ 1 \}$ which is a compact subset of $\Rb$.
		\item If $e$ is of the form $P_i$, the image of $e$ is
			the $i$-th projection of $X$ which is compact.
		\item If $e$ is of the form $a e_1$, the image of $e$ is contained
			in $\{ ay \mid y \in Y \}$, which is closed and bounded.
		\item If $e$ is of the form $e_1 + e_2$, the image of $e$
			is contained in $\{ y_1 + y_2 \mid y_1 \in Y_1 \text{ and } y_2 \in Y_2 \}$,
			which is closed and bounded.
		\item If $e$ is of the form $f(e_1)$, the image of $e$ is
			contained in $f(Y_1)$. Since $f$ is continuous, $f(Y_1)$ is
			also a compact subset of $\Rb$.
		\item If $e$ is of the form $\Diamon(e_1)$, by the degree bound $p$,
			the image of $e$ is contained in $\{ y_1 + \dots + y_p \mid y_1, \dots, y_p \in Y_1 \cup \{0\}\}$,
			which is compact.
	\end{itemize}
\end{proof}

With $p$ and $X$ as in the lemma, and
any two MPLang expression $e_1$ and $e_2$ appropriate for input
arity $d$, the set $$ \{
	|e_1(G)(\chi)(v) - e_2(G)(\chi)(v)| : G \in \Grs_p \ \& \
	\chi \in \FeatX GdX \ \& \ v \in V(G)\} $$
has a supremum.  We define $\dist(e_1,e_2)$, the uniform distance between
$e_1$ and $e_2$ over $\Grs_p$ and $X$, to be that supremum.

The main result of this section can now be stated as follows.
Note that we approximate MPLang expressions by $\ReLU$-MPLang
expressions. These can then be further converted to $\ReLU$-MPNNs
by Theorem~\ref{thmrelu}.

\begin{theorem}
	\label{thmapprox}
	Let $p$ and $d$ be natural numbers, and let $X \subset
		\Rb^{d}$ be compact. Let
	$e$ be an MPLang expression appropriate for $d$, and let
	$\epsilon > 0$ be a real number.  There exists a ReLU-MPLang
	expression $e'$ such that $\dist(e,e') \leq \epsilon$.
\end{theorem}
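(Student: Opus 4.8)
The plan is to proceed by induction on the structure of $e$, constructing for each subexpression a ReLU-MPLang expression that approximates it, and to propagate the error bounds carefully through each MPLang constructor. The guiding observation is that, over $\Grs_p$ and $X$, each constructor is ``Lipschitz'' in its arguments with a readily computable constant: scalar multiplication by $a$ scales the uniform distance by $|a|$; addition adds the two distances; and, crucially, the diamond $\Diamon$ scales the distance by at most $p$, since every node in a graph of $\Grs_p$ has at most $p$ neighbors. Given a target accuracy $\epsilon$ for $e$, I would therefore recursively pass down a suitably reduced accuracy to the subexpressions so that the accumulated error stays within $\epsilon$.

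Concretely, for the base cases $e = \Ones$ and $e = P_i$ I would simply take $e' = e$, which is already a ReLU-MPLang expression and contributes zero error. For $e = a e_1$ I would approximate $e_1$ to within $\epsilon / |a|$ (the case $a = 0$ being trivial) and set $e' = a e_1'$; for $e = e_1 + e_2$ I would approximate each $e_i$ to within $\epsilon / 2$ and set $e' = e_1' + e_2'$; and for $e = \Diamon e_1$ I would approximate $e_1$ to within $\epsilon / p$ and set $e' = \Diamon e_1'$ (the degenerate case $p = 0$, where $\Diamon$ yields the zero map, being immediate). In each case the triangle inequality, together with the neighbor bound $p$ in the diamond case, gives $\dist(e, e') \leq \epsilon$ directly from the inductive hypothesis.

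The only genuinely new work is the case $e = f(e_1)$ with $f$ an arbitrary continuous function, and here the Density Property enters. By Lemma~\ref{lem-compact} the image of $e_1$ over $\Grs_p$ and $X$ lies in a compact set $Y_1$; I would enlarge it to the compact set $Y_1^+ = \{x : \mathrm{dist}(x, Y_1) \leq 1\}$. Since $f$ is continuous on the compact set $Y_1^+$, it is uniformly continuous there, so I can fix $\delta \leq 1$ with $|f(x) - f(y)| \leq \epsilon/2$ whenever $x, y \in Y_1^+$ and $|x - y| \leq \delta$. I would then invoke the induction to obtain $e_1'$ with $\dist(e_1, e_1') \leq \delta$; because $\delta \leq 1$, the image of $e_1'$ is still contained in $Y_1^+$. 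Finally, applying the Density Property to $f$ on $Y_1^+$ with accuracy $\epsilon/2$ yields coefficients $a_i, b_i, c_i$, and I would set $e' = \sum_i c_i \ReLU(a_i e_1' - b_i)$, which is a ReLU-MPLang expression. Splitting $|f(e_1(v)) - e'(v)|$ through the intermediate value $f(e_1'(v))$, the uniform-continuity bound controls $|f(e_1(v)) - f(e_1'(v))|$ by $\epsilon/2$ and the Density Property controls $|f(e_1'(v)) - e'(v)|$ by $\epsilon/2$, for a total of $\epsilon$.

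The main obstacle is precisely this last case, and within it the careful ordering of the choices: the domain on which $f$ must be approximated depends on the image of $e_1'$, yet $e_1'$ is only produced after we have fixed an accuracy for it. Passing to the fixed enlargement $Y_1^+$ resolves this circularity, since any approximation of $e_1$ within distance $1$ keeps all values inside $Y_1^+$, allowing the uniform-continuity modulus and the Density approximation of $f$ to be chosen once and for all on that fixed compact domain. The remaining cases are routine error bookkeeping.
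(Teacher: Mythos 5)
Your proposal is correct and follows essentially the same route as the paper: structural induction with error budgets $\epsilon/|a|$, $\epsilon/2$, and $\epsilon/p$ for the affine and $\Diamon$ cases, and, for $f(e_1)$, Lemma~\ref{lem-compact} plus the Density Property applied on a fixed compact enlargement of the image of $e_1$, chosen in advance precisely to break the circularity you identify. The only (harmless) deviation is in which intermediate term you split through in the $f(e_1)$ case: you bound $|f(e_1)-f(e_1')| + |f(e_1')-f'(e_1')|$ using uniform continuity of $f$ itself, whereas the paper bounds $|f(e_1)-f'(e_1)| + |f'(e_1)-f'(e_1')|$ using uniform continuity of the ReLU approximant $f'$ (and uses the enlargement $[\min(Y_1)-\epsilon/2,\max(Y_1)+\epsilon/2]$ rather than your width-$1$ neighborhood); both orderings of the choices are valid.
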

\begin{proof}

	By induction on the structure of $e$. If $e$ is $1$ or of
	the form $P_i$, then $e'$ is simply $e$. In
	the inductive cases where $e$ is of the form $ae_1$, $e_1+e_2$,
	or $f(e_1)$, we consider any $G \in \Grs_p$, any $\chi \in
		\FeatX GdX$, and any $v \in V(G)$, but abbreviate
	$e(G)(\chi)(v)$ simply as $e$.

	Let $e$ be of the form $a e_1$.  If $a=0$ we set $e'=0$.
	Otherwise, let $e_1'$ be the expression obtained by
	induction applied to $e_1$ and $\epsilon/a$.  We then set $e' =
		ae'_1$.  The inequality $|e-e'|\leq \epsilon$ is readily
	verified.

	Let $e$ be of the form $e_1 + e_2$.  For $j=1,2$, let $e'_j$ be
	the expression obtained by induction applied to $e_j$ and
	$\epsilon/2$.  We then set $e'=e'_1 + e'_2$.  The inequality
	$|e-e'|\leq \epsilon$ now follows from the triangle inequality.

	Let $e$ be of the form $f(e_1)$. By Lemma~\ref{lem-compact},
	the image of $e_1$ is a compact set $Y_1 \subset \Rb$. We
	define the closed interval $Y = [\min(Y_1) - \epsilon/2,
		\max(Y_1) + \epsilon/2]$.  By the Density Property, there
	exists $f'$ such that $\rho_Y(f,
		f') \leq \epsilon/2$.  Since $Y$ is compact, $f'$ is uniformly
	continuous on $Y$. Thus there exists
	$\delta>0$ such that $|f'(x) - f'(x')| < \epsilon/2$
	whenever $|x - x'|<\delta$.

	We now take $e_1'$ to be the expression obtained by induction applied to
	$e_1$ and $\min(\delta,\epsilon/2)$.  We see that the image of
	$e_1'$ is contained in $Y$.  Setting $e' = f(e'_1)$,
	we verify that $|e-e'|
		= |f(e_1) - f'(e'_1)| + |f'(e_1)-f'(e'_1)| \leq \epsilon$ as
	desired.

	Our final inductive case is when $e$ is of the form $\Diamon
		e_1$.  We again consider any $G \in \Grs_p$, any $\chi \in
		\FeatX GdX$, and any $v \in V(G)$, but this time abbreviate
	$e(G)(\chi)(v)$ as $e(v)$.
	Let $e_1'$ be the expression obtained by induction applied to
	$e_1$ and $\epsilon/p$. Setting $e' = \Diamon e'_1$, we
	verify, as desired:
	\begin{align*}
		|e(v)-e'(v)| & = |\sum_{u \in \Neigh Gv} e_1(u) - \sum_{u \in
		\Neigh Gv} e_1'(u)|                                           \\
		             & \leq \sum_{u \in \Neigh Gv} |e_1(u)-e_1'(u)|   \\
		             & \leq p (\epsilon/p)                            \\
		             & = \epsilon.
	\end{align*}
	The penultimate step clearly uses that $G$ has degree bound $p$.
	(This degree bound is also used in Lemma~\ref{lem-compact}.)
\end{proof}

\section{Concluding remarks} \label{seconc}

We believe that our approach has the advantage of modularity.  For
example, Theorem~\ref{thmrelu} is stated for ReLU, but holds for
any activation function for which Lemmas
\ref{lem-layer-concat} and \ref{lem-idlayer} can be shown.
We already noted that the Density Property holds not just for
ReLU but for any
nonpolynomial continuous activation function.  It follows that
for any activation function $\sigma$ for which Lemmas
\ref{lem-layer-concat} and \ref{lem-idlayer} can be shown, every
MPLang expression can be approximated by a $\sigma$-MPNN.

The proof of Theorem~\ref{thmcont}, and the Propositions~\ref{prop-mplang-no-plus-mpnn}
and \ref{prop-mplang-no-plus-diamon-mpnn} give us a set of sufficient conditions
such that for each $\setF$-MPLang expression,
there is an equivalent $\setF$-MPNN.
The first condition is that Lemma~\ref{lem-conc-cont} is true when
the activation functions are restricted to $\setF$ and without the restrictions
on the graph and the feature map.
The second condition is Lemma~\ref{lem-idlayer} holds for
$\setF$ instead of $\ReLU$.

It would be interesting to see if this set of requirements for
an exact translation can be further refined or if it can be
proven that this is a set of necessary conditions.

We have so far proven 2 sets of functions such that their
MPLang expressions have equivalent MPNNs.
First there is the set $\{ \ReLU \}$ and using
Lemma~\ref{lem-idlayer} we can prove that the
same holds for $\{ \ReLU, \identity \}$.
Second there is the set of all continuous
functions under the restriction that all
graphs are of a certain bounded degree $p$
and that all feature vectors come from some
compact set. It would be interesting to see
if there are other sets of functions $\setF$
for which each $\setF$-MPLang expression has an equivalent
$\setF$-MPNN and sets for which this is not the case.

It would be interesting to see counterexamples that show that
Theorems \ref{thmcont} and \ref{thmapprox} do not hold without
the restriction to bounded-degree graphs, or to features from a
bounded or compact domain.  Such counterexamples can probably be
derived from known counterexamples in analysis or approximation
theory.

Finally, in this work we have focused on the question whether MPLang
can be simulated by MPNNs.  However, it is also interesting to
investigate the expressive power of MPLang by itself.  For
example, is the GFMT $T_3$ from Example~\ref{exgfmt} expressible
in MPLang?

\end{document}